\documentclass[11pt]{article}

\usepackage[T1]{fontenc}
\usepackage{lmodern}
\usepackage[margin=1in]{geometry}
\usepackage{microtype}

\usepackage{amsmath,amssymb,amsthm,mathtools}

\usepackage{graphicx}
\graphicspath{{figures/}}
\usepackage{booktabs}
\usepackage{array}
\usepackage{tabularx}

\usepackage[shortlabels]{enumitem}
\usepackage[numbers,sort&compress]{natbib}
\usepackage{xcolor}
\usepackage[colorlinks=true,linkcolor=blue,citecolor=blue,urlcolor=blue]{hyperref}
\usepackage[capitalise,noabbrev]{cleveref}

\theoremstyle{plain}
\newtheorem{theorem}{Theorem}[section]

\newtheorem{proposition}[theorem]{Proposition}
\newtheorem{corollary}[theorem]{Corollary}

\theoremstyle{definition}

\theoremstyle{remark}

\newcommand{\R}{\mathbb{R}}

\newcommand{\E}{\mathbb{E}}
\newcommand{\Pp}{\mathbb{P}}
\newcommand{\cF}{\mathcal{F}}
\newcommand{\cG}{\mathcal{G}}
\newcommand{\cH}{\mathcal{H}}

\newcommand{\ip}[2]{\left\langle #1,#2\right\rangle}
\newcommand{\norm}[1]{\left\lVert #1\right\rVert}
\newcommand{\pos}[1]{\left(#1\right)_{+}}
\DeclareMathOperator{\dist}{dist}
\DeclareMathOperator{\Reg}{Reg}
\DeclareMathOperator{\MMD}{MMD}

\renewcommand{\arraystretch}{1.15}

\title{From Approachability Residuals to Anytime-Valid Evidence:\\
The Online Convex Geometry of Testing by Betting}
\author{Jinze Zhao\\University of California, San Diego\\\texttt{jiz419@ucsd.edu}}
\date{}

\begin{document}

\maketitle

\begin{abstract}
Betting-based sequential tests and Blackwell approachability are linked by a
rate-explicit reduction through support-function residuals.  For a compact
convex target $S$ and vector observations $r_t$, an OCO learner selects a
predictable normal $w_t$ and produces
$q_t=\langle w_t,r_t\rangle-h_S(w_t)$.  We prove the exact pathwise identity
\[
  \dist(\bar r_T,S)
  =\frac1T\sum_{t=1}^Tq_t+\frac{\Reg_T}{T}.
\]
When $|q_t|\leq B$, composing this identity with one-sided betting yields a
finite-time transfer: if the OCO and log-wealth regrets are at most $a_T$ and
$\ell_T$, respectively, then a target gap exceeding
\[
  \frac{a_T}{T}
  +2B\sqrt{\frac{\log(1/\alpha)+\ell_T}{T}}
\]
forces rejection by time $T$, while non-rejection certifies the converse
radius.  We then formulate a controlled stochastic experiment in which an
action selected after $w_t$ satisfies Blackwell's supporting-halfspace
condition for every null mean payoff.  The resulting wealth is an e-process
under adaptive nulls; sublinear OCO regret gives stochastic approachability,
whereas persistent mean separation under an alternative gives exponential
wealth at rate at least $\delta^2/(4B^2)$.  Deterministic Blackwell games and
passive tests are, respectively, the noise-free and singleton-action cases of
this protocol.  Bounded two-sample means, kernel MMD, and active heterogeneous
data sources instantiate the reduction.  The resulting connection is exact
algebraically, quantitative at finite time, and operational when experiments
are controlled.

\end{abstract}


\section{Introduction}
\label{sec:introduction}

Sequential two-sample testing asks whether two data streams have the same law
while allowing the sample size to be chosen online.  The testing-by-betting
framework addresses optional stopping by maintaining nonnegative wealth: the
null is rejected once the wealth crosses $1/\alpha$, and Ville's inequality
controls the probability of ever crossing this threshold at level $\alpha$
\citep{ville1939,shafer2021}.  In the nonparametric construction of
\citet{shekhar2024}, a prediction algorithm chooses a bounded witness function
$g_t$ from past data, the next pair $(X_t,Y_t)$ produces the scalar payoff
\[
  v_t=g_t(X_t)-g_t(Y_t),
\]
and a second online algorithm chooses a betting fraction $\lambda_t$ in the
wealth update $K_t=K_{t-1}(1+\lambda_tv_t)$.  The statistical power of the test
is controlled by the regret of the witness learner, whereas anytime validity
follows from conditional fairness under the null.  A 2025 correction supplies
a repaired proof of the expected-stopping-time claim without changing its
order \citep{shekharcorr2025}.

The same ingredients---a predictable direction, a scalar payoff, and online
regret---drive Blackwell's approachability theory.  In a repeated vector-payoff
game, an agent selects actions so that the average payoff approaches a convex
target $S$ against every strategy of Nature \citep{blackwell1956}.
\citet{abernethy2011} established algorithmic reductions between
approachability and online linear optimization, and \citet{shimkin2016} gave a
direct OCO formulation for general convex targets using the support identity
\begin{equation}
  \dist(r,S)=\max_{\norm{w}\leq1}
  \{\ip{w}{r}-h_S(w)\}.
  \label{eq:support-distance-intro}
\end{equation}
In this formulation, OCO learns a normal direction $w_t$ and the Blackwell
action makes the signed support residual
$\ip{w_t}{r_t}-h_S(w_t)$ nonpositive.

This paper shows that the same residual organizes betting-based tests.  In a
passive test, the action set is a singleton and the halfspace condition holds
in conditional mean under the null.  Betting then converts positive departures
from that condition into anytime-valid evidence.  When the statistician can
choose a source, query, treatment, or experiment, the action set is nontrivial:
the selected action can enforce Blackwell's halfspace condition over a
composite null, while an active design policy preserves separation under
alternatives.  Thus the connection has an exact passive form and a literal
controlled-game form.

\paragraph{Contributions.}
\begin{enumerate}[(i)]
  \item We derive an exact residual--regret identity for any compact convex
        target.  The average support residual produced by an OCO normal learner
        equals distance to the target minus regret per round.
  \item We compose that identity with a one-sided test supermartingale and
        prove a finite-time rate-transfer theorem.  The Blackwell/OCO radius
        $a_T/T$ appears unchanged, and anytime evidence adds the explicit
        radius
        $2B\sqrt{\frac{\log(1/\alpha)+\ell_T}{T}}$.
  \item We recover bounded-mean and kernel-MMD betting tests by taking
        $S=\{0\}$ and embedding observations as half feature differences.  This
        makes the normalization, witness regret, and target gap explicit.
  \item We construct a controlled stochastic Blackwell--betting game.  A
        null-safe experiment response yields both an e-process and stochastic
        approachability under adaptive nulls; separated alternatives yield
        exponential wealth.  Deterministic approachability and passive testing
        are special cases.
  \item For heterogeneous two-sample sources, a direct-sum RKHS embedding
        prevents cross-source cancellation and gives an explicit evidence
        exponent in terms of source frequencies and MMD discrepancies.
\end{enumerate}

The reduction is exact at three levels: algebraically through the
residual--regret identity, quantitatively through the finite-time evidence
radius, and operationally through the controlled mean-payoff game.  This
structure also identifies where rate-sensitive concentration enters: the
pathwise reduction controls the empirical target gap, while sharp expected
stopping times require additional variance and tail analysis.

\section{Two Online Protocols}
\label{sec:background}

This section records the two protocols before identifying their common convex-analytic component.  Keeping the order of moves explicit is essential.

\subsection{Betting-based two-sample testing}

Let $(X_t,Y_t)_{t\geq1}$ be adapted to a filtration $(\cF_t)_{t\geq0}$.  In the i.i.d. two-sample problem, $(X_t,Y_t)\sim P_X\times P_Y$, and the null is $P_X=P_Y$.  Fix a symmetric class of functions $\cG\subset[-1/2,1/2]^{\mathcal X}$.  At round $t$:
\begin{enumerate}[(1)]
  \item a prediction algorithm chooses an $\cF_{t-1}$-measurable witness $g_t\in\cG$;
  \item a betting algorithm chooses an $\cF_{t-1}$-measurable fraction $\lambda_t$;
  \item the pair $(X_t,Y_t)$ is observed and $v_t=g_t(X_t)-g_t(Y_t)\in[-1,1]$;
  \item wealth is updated as $K_t=K_{t-1}(1+\lambda_tv_t)$.
\end{enumerate}
Under the equality null,
\begin{equation}
  \E[v_t\mid\cF_{t-1}]
  =\E[g_t(X_t)-g_t(Y_t)\mid\cF_{t-1}]=0.
  \label{eq:null-fairness}
\end{equation}
Consequently, predictable bets $\lambda_t\in[-1/2,1/2]$ make $(K_t)$ a nonnegative martingale.  Ville's inequality gives
\begin{equation}
  \sup_{P_X=P_Y}\Pp\!\left(\sup_{t\geq1}K_t\geq\frac1\alpha\right)\leq\alpha.
  \label{eq:ville}
\end{equation}

The witness learner is evaluated by its gain regret
\begin{equation}
  R_T
  =\sup_{g\in\cG}\sum_{t=1}^T\{g(X_t)-g(Y_t)\}
   -\sum_{t=1}^T\{g_t(X_t)-g_t(Y_t)\}.
  \label{eq:witness-regret}
\end{equation}
For an integral probability metric (IPM), the first term is an empirical support function.  If $R_T=o(T)$ and the population IPM is positive, the average realized payoff is eventually positive.  The Online Newton Step (ONS) betting update used by \citet{shekhar2024}, based on \citet{cutkosky2018}, then turns this margin into exponential wealth.  Their deterministic bound includes
\begin{equation}
  \log K_T\geq \frac{T}{8}\left(\frac1T\sum_{t=1}^Tv_t\right)^2-\log T,
  \label{eq:shekhar-wealth}
\end{equation}
up to the initialization conventions stated in that work.

\subsection{Blackwell approachability through OCO}

In the classical finite-dimensional game, an agent has actions $i\in I$, Nature has actions $j\in J$, and $r(i,j)\in\R^d$ is a vector payoff.  At round $t$, the agent chooses a mixed action $x_t\in\Delta(I)$, Nature chooses $j_t$, and the smoothed payoff is $r_t=r(x_t,j_t)$.  A closed convex set $S\subset\R^d$ is approachable if the agent can guarantee $\dist(\bar r_T,S)\to0$ against every strategy of Nature \citep{blackwell1956}.

For a compact convex $S$, its support function is
\[
  h_S(w)=\sup_{s\in S}\ip{w}{s}.
\]
Blackwell's separating-halfspace condition can be written as follows: for every $w$ in the Euclidean unit ball, the agent can choose a mixed action $x_S(w)$ satisfying
\begin{equation}
  \ip{w}{r(x_S(w),j)}-h_S(w)\leq0
  \quad\text{for every }j\in J.
  \label{eq:blackwell-response}
\end{equation}

\citet{shimkin2016} applies OCO to the convex losses
\begin{equation}
  f_t(w)=h_S(w)-\ip{w}{r_t},\qquad \norm{w}\leq1.
  \label{eq:shimkin-loss}
\end{equation}
The OCO learner outputs $w_t$ before $r_t$ is revealed; the agent then plays $x_t=x_S(w_t)$ so that \cref{eq:blackwell-response} holds.  If the OCO regret is at most $a(T)$, the distance is at most $a(T)/T$.  Online gradient descent gives the usual $O(T^{-1/2})$ rate, while follow-the-leader recovers Blackwell's projection direction.  Under a smooth-boundary assumption, Shimkin obtains a faster $O(\log T/T)$ bound; regularized follow-the-leader recovers the general $O(T^{-1/2})$ result.

The roles are therefore:
\[
  \begin{gathered}
    \boxed{\text{OCO chooses a normal }w_t}\\[-1pt]
    \Downarrow\\[-1pt]
    \boxed{\text{a response }x_t\text{ controls }r_t}\\[-1pt]
    \Downarrow\\[-1pt]
    \boxed{\text{the residual is nonpositive}.}
  \end{gathered}
\]
Only the first and last boxes have immediate analogues in a passive two-sample test.

\subsection{The shared support-function residual}

Both protocols scalarize a vector $r_t$ by a predictable direction $w_t$.  For a target $S$, define the signed residual
\begin{equation}
  q_t(w)=\ip{w}{r_t}-h_S(w).
  \label{eq:residual}
\end{equation}
It is positive when the direction $w$ separates $r_t$ from the supporting halfspace of $S$.  Blackwell's response makes $q_t(w_t)\leq0$.  A betting test instead treats a positive average residual as evidence that the conditional mean lies outside $S$.  The next section makes this statement exact.

\section{From an Approachability Residual to a Test Supermartingale}
\label{sec:bridge}

We work in a separable real Hilbert space $(\cH,\ip{\cdot}{\cdot})$.  Let $S\subset\cH$ be nonempty, compact, and convex, and let $\mathbb B=\{w:\norm{w}\leq1\}$.  The projection theorem and convex duality give
\begin{equation}
  \dist(r,S)=\sup_{w\in\mathbb B}\{\ip{w}{r}-h_S(w)\}.
  \label{eq:hilbert-distance}
\end{equation}
In the finite-dimensional case this is exactly the identity used by \citet{shimkin2016}.

At time $t$, an OCO algorithm selects an $\cF_{t-1}$-measurable $w_t\in\mathbb B$, after which $r_t$ is revealed.  Define the convex loss $f_t(w)=h_S(w)-\ip{w}{r_t}$ and realized regret
\begin{equation}
  \Reg_T=\sum_{t=1}^Tf_t(w_t)-\inf_{w\in\mathbb B}\sum_{t=1}^Tf_t(w).
  \label{eq:oco-regret}
\end{equation}

\begin{proposition}[Residual--regret identity]
\label{prop:identity}
For every deterministic sequence $r_1,\ldots,r_T$ and every sequence $w_1,\ldots,w_T\in\mathbb B$,
\begin{equation}
  \dist(\bar r_T,S)
  =\frac1T\sum_{t=1}^T\bigl(\ip{w_t}{r_t}-h_S(w_t)\bigr)
   +\frac{\Reg_T}{T}.
  \label{eq:exact-identity}
\end{equation}
\end{proposition}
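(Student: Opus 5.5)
The plan is to expand the definition of $\Reg_T$ in \cref{eq:oco-regret} and identify the comparator term with $T\,\dist(\bar r_T,S)$ via the support-function representation \cref{eq:hilbert-distance}.

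\textbf{Step 1: the played losses.} By definition of $f_t$,
\[
  \sum_{t=1}^T f_t(w_t)=-\sum_{t=1}^T\bigl(\ip{w_t}{r_t}-h_S(w_t)\bigr).
\]

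\textbf{Step 2: the comparator.} Linearity of the inner product in its second argument gives
\[
  \sum_{t=1}^T f_t(w)=T\bigl(h_S(w)-\ip{w}{\bar r_T}\bigr),
\]
since $h_S(w)$ does not depend on $t$. Therefore
\[
  \inf_{w\in\mathbb B}\sum_{t=1}^T f_t(w)
  =-T\sup_{w\in\mathbb B}\bigl\{\ip{w}{\bar r_T}-h_S(w)\bigr\}
  =-T\,\dist(\bar r_T,S),
\]
where the last equality is \cref{eq:hilbert-distance} applied at $r=\bar r_T$.

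\textbf{Step 3: rearrange.} Substituting Steps 1 and 2 into \cref{eq:oco-regret} gives
\[
  \Reg_T=-\sum_{t=1}^T q_t+T\,\dist(\bar r_T,S),
\]
with $q_t=\ip{w_t}{r_t}-h_S(w_t)$. Dividing by $T$ and rearranging yields \cref{eq:exact-identity}.

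No randomness or measurability is used, so the identity is purely pathwise. This matches the "every deterministic sequence" quantifier in the statement.

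\textbf{Main obstacle.} The only substantive point is \cref{eq:hilbert-distance} in a general Hilbert space, which the paper takes as given. If I had to justify it, I would argue in two directions.

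For "$\le$": let $p$ be the projection of $r$ onto $S$, which exists and is unique because $S$ is closed and convex. If $r\in S$, take $w=0$, which attains value $0=\dist(r,S)$. Otherwise take $w=(r-p)/\norm{r-p}$. The projection inequality $\ip{r-p}{s-p}\le 0$ for all $s\in S$ gives $h_S(w)=\ip{w}{p}$, so $\ip{w}{r}-h_S(w)=\ip{w}{r-p}=\norm{r-p}$.

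For "$\ge$": for any $w\in\mathbb B$,
\[
  \ip{w}{r}-h_S(w)\le\ip{w}{r}-\ip{w}{p}\le\norm{r-p}
\]
by Cauchy--Schwarz. Hence the supremum is attained and equals $\dist(r,S)$.

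Compactness of $S$ makes $h_S$ finite everywhere, so every term in the identity is well defined.
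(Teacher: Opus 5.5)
Your proposal is correct and is the same argument as the paper's: compute $\sum_t f_t(w_t)$, identify $\inf_{w\in\mathbb B}\sum_t f_t(w)$ with $-T\dist(\bar r_T,S)$ through the support representation of distance, and rearrange. Your justification of the comparator step ($h_S(w)$ does not depend on $t$, and the inner product is linear in $r_t$) is more precise than the paper's appeal to positive homogeneity, and your projection argument correctly proves the Hilbert-space distance identity that the paper takes as given.
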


\begin{proof}
Positive homogeneity of the support function implies
\[
  \inf_{w\in\mathbb B}\sum_{t=1}^Tf_t(w)
  =-T\sup_{w\in\mathbb B}\{\ip{w}{\bar r_T}-h_S(w)\}
  =-T\dist(\bar r_T,S).
\]
Also, $\sum_tf_t(w_t)=-\sum_t(\ip{w_t}{r_t}-h_S(w_t))$.  Substitution into \cref{eq:oco-regret} and rearrangement prove the result.
\end{proof}

This equality is the common mathematical skeleton.  Its two uses have opposite signs:
\begin{itemize}
  \item If a Blackwell response guarantees $\ip{w_t}{r_t}-h_S(w_t)\leq0$ in every round, then $\dist(\bar r_T,S)\leq\Reg_T/T$.
  \item If the environment produces $\dist(\bar r_T,S)$ bounded away from zero and $\Reg_T=o(T)$, then the average residual is positive and can be bet on.
\end{itemize}

\subsection{A general approachability-gap monitor}

Assume there is a known $B>0$ such that
\begin{equation}
  \left|\ip{w}{r_t}-h_S(w)\right|\leq B
  \quad\text{for all }w\in\mathbb B\text{ and all }t.
  \label{eq:bounded-residual}
\end{equation}
Set
\[
  v_t=\frac{\ip{w_t}{r_t}-h_S(w_t)}{B}\in[-1,1].
\]
Let a betting algorithm choose predictable $\lambda_t\in[0,1/2]$ and update $K_t=K_{t-1}(1+\lambda_tv_t)$ from $K_0=1$.  We use nonnegative bets because the general convex null gives a supermartingale rather than necessarily a martingale.

\begin{theorem}[Anytime-valid approachability-gap monitor]
\label{thm:gap-monitor}
Suppose the null hypothesis is
\begin{equation}
  H_0:\quad m_t:=\E[r_t\mid\cF_{t-1}]\in S
  \quad\text{almost surely for every }t.
  \label{eq:convex-null}
\end{equation}
Then $(K_t)$ is a nonnegative supermartingale, and the stopping rule
\[
  \tau_\alpha=\inf\{t\geq1:K_t\geq1/\alpha\}
\]
satisfies $\sup_{H_0}\Pp(\tau_\alpha<\infty)\leq\alpha$.

Suppose in addition that the betting algorithm has log-wealth regret $L_T$ relative to the best constant bet in $[0,1/2]$:
\begin{equation}
  \log K_T\geq
  \max_{0\leq\lambda\leq1/2}\sum_{t=1}^T\log(1+\lambda v_t)-L_T.
  \label{eq:bet-regret}
\end{equation}
Then, pathwise,
\begin{equation}
  \log K_T\geq
  \frac{T}{4B^2}
  \pos{\dist(\bar r_T,S)-\Reg_T/T}^{2}-L_T.
  \label{eq:wealth-distance}
\end{equation}
In particular, projected ONS on $[0,1/2]$ has $L_T=O(\log T)$.
\end{theorem}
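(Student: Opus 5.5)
The argument splits into the validity claim, the pathwise wealth bound, and the ONS remark.

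\emph{Validity.} I will show that under $H_0$ each payoff is conditionally nonpositive in mean, $\E[v_t\mid\cF_{t-1}]\le 0$. Since $w_t$ is $\cF_{t-1}$-measurable and the residual is bounded by \cref{eq:bounded-residual}, linearity of the inner product gives
\[
\E[\ip{w_t}{r_t}\mid\cF_{t-1}]=\ip{w_t}{m_t}.
\]
In the Hilbert setting this uses Bochner conditional expectation, which exists because $r_t$ is bounded along every direction in $\mathbb B$. Since $m_t\in S$, the definition of the support function gives $\ip{w_t}{m_t}\le h_S(w_t)$, and hence $\E[v_t\mid\cF_{t-1}]\le0$. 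Because $\lambda_t\in[0,1/2]$ is predictable and nonnegative,
\[
\E[K_t\mid\cF_{t-1}]=K_{t-1}\bigl(1+\lambda_t\E[v_t\mid\cF_{t-1}]\bigr)\le K_{t-1}.
\]
Moreover $1+\lambda_tv_t\ge 1/2>0$, so the wealth stays positive. Ville's inequality for nonnegative supermartingales started at $K_0=1$ then yields $\Pp(\sup_tK_t\ge1/\alpha)\le\alpha$, uniformly over the null.

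\emph{Wealth bound.} Let $\bar v_T=\frac1T\sum_t v_t$. By \cref{prop:identity},
\[
\bar v_T=\frac{\dist(\bar r_T,S)-\Reg_T/T}{B}.
\]
The key step, which I expect to be the only real obstacle, is to lower-bound the best constant bet in terms of $\bar v_T$ alone. I will use the elementary inequality $\log(1+x)\ge x-x^2$ for $|x|\le1/2$. Since $|\lambda v_t|\le1/2$, this gives
\[
\sum_t\log(1+\lambda v_t)\ge \lambda T\bar v_T-\lambda^2\sum_t v_t^2\ge T(\lambda\bar v_T-\lambda^2),
\]
where the last step uses $v_t^2\le1$. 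If $\bar v_T\le0$, the bound holds trivially with $\lambda=0$. Otherwise take $\lambda=\bar v_T/2$, which lies in $[0,1/2]$ because $\bar v_T\le1$. This choice gives $T\bar v_T^2/4$. Combining it with \cref{eq:bet-regret} yields
\[
\log K_T\ge \frac T4\pos{\bar v_T}^2-L_T,
\]
and substituting $\bar v_T$ gives exactly \cref{eq:wealth-distance} with the factor $1/(4B^2)$. The constant $1/4$ is what this inequality delivers, so I will check carefully that $\log(1+x)\ge x-x^2$ holds on all of $[-1/2,1/2]$. At the lower endpoint it reads $\log(1/2)\approx-0.693\ge-0.75$, and it holds in between because $x-x^2-\log(1+x)$ has derivative $-x(1+2x)/(1+x)$, which is nonnegative on $[-1/2,0]$ and nonpositive on $[0,\infty)$. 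So this difference is maximized at $x=0$, where it equals $0$.

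\emph{ONS.} For the final remark I will appeal to the standard ONS guarantee for exp-concave losses. The losses $-\log(1+\lambda v_t)$ have gradients bounded by $2$ on $[0,1/2]$ and are exp-concave. The known regret bound, as in \citet{cutkosky2018} and \citet{shekhar2024}, then gives $L_T=O(\log T)$, and projecting onto $[0,1/2]$ preserves this bound.
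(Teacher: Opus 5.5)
Your proposal is correct and follows the paper's proof. The ingredients are the same: conditional superfairness from the predictability of $w_t$ and $m_t\in S$, nonnegativity from $\lambda_t\in[0,1/2]$, Ville's inequality, and then $\log(1+x)\ge x-x^2$ combined with \cref{prop:identity}. The only difference is minor: you bound $\sum_t v_t^2\le T$ up front and take $\lambda=\bar v_T/2$, whereas the paper's appendix uses the variance-adaptive choice $\lambda=\min\{\bar v_T/(2Q_T),1/2\}$ with a two-case analysis that reaches the same constant $1/4$.
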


\begin{proof}
Because $w_t$ is predictable and $m_t\in S$,
\[
  \E[v_t\mid\cF_{t-1}]
  =\frac{\ip{w_t}{m_t}-h_S(w_t)}{B}\leq0.
\]
Thus $\E[K_t\mid\cF_{t-1}]\leq K_{t-1}$, and nonnegativity follows from $\lambda_t\in[0,1/2]$ and $v_t\geq-1$.  Ville's inequality proves validity.  The elementary comparator bound in \cref{app:constant-bet} gives
\[
  \max_{0\leq\lambda\leq1/2}\sum_{t=1}^T\log(1+\lambda v_t)
  \geq \frac{T}{4}\pos{\bar v_T}^{2}.
\]
Finally, \cref{prop:identity} gives $B\bar v_T=\dist(\bar r_T,S)-\Reg_T/T$.
\end{proof}

\begin{corollary}[Power under a separated stationary alternative]
\label{cor:power}
Suppose $(r_t)$ are i.i.d., bounded, and have mean $\mu$ with $\delta=\dist(\mu,S)>0$.  If $\Reg_T=o(T)$ almost surely and $L_T=o(T)$, then
\[
  \liminf_{T\to\infty}\frac1T\log K_T\geq\frac{\delta^2}{4B^2}
  \quad\text{almost surely}.
\]
Consequently $\Pp(\tau_\alpha<\infty)=1$ for every $\alpha\in(0,1)$.
\end{corollary}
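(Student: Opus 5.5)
The plan is to divide the pathwise bound \cref{eq:wealth-distance} from \cref{thm:gap-monitor} by $T$ and pass to the limit. Dividing gives
\[
  \frac1T\log K_T\geq\frac{1}{4B^2}\pos{\dist(\bar r_T,S)-\Reg_T/T}^{2}-\frac{L_T}{T}.
\]
The right-hand side has three pieces, which I will treat in turn.

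First, I will show $\dist(\bar r_T,S)\to\delta$ almost surely. Since the $r_t$ are i.i.d. and bounded, the strong law of large numbers in the separable Hilbert space $\cH$ gives $\bar r_T\to\mu$ in norm almost surely. This holds because bounded Bochner-integrable i.i.d. variables obey the SLLN in separable Banach spaces (Mourier); in finite dimensions it is just the coordinatewise SLLN. The map $r\mapsto\dist(r,S)$ is $1$-Lipschitz, so $\dist(\bar r_T,S)\to\dist(\mu,S)=\delta$ almost surely.

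Second, by hypothesis $\Reg_T/T\to0$ almost surely and $L_T/T\to0$. Hence, on the probability-one event where all these limits hold, continuity of $x\mapsto\pos{x}^2$ gives
\[
  \liminf_T\frac1T\log K_T\geq\frac{\delta^2}{4B^2}.
\]
Here $B$ is the constant from \cref{eq:bounded-residual}, which is assumed to hold. Boundedness of $r_t$ together with compactness of $S$ makes $h_S$ bounded on $\mathbb B$, so such a $B$ exists.

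Third, for the consequence, the liminf bound is strictly positive, so $K_T\to\infty$ almost surely. In particular $K_T\geq1/\alpha$ for some finite $T$ almost surely, which means $\Pp(\tau_\alpha<\infty)=1$.

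The only step needing care is the Hilbert-space SLLN, and in finite dimensions it is trivial. One small subtlety is the filtration. \cref{thm:gap-monitor}'s bound is pathwise, so no independence between $w_t$ and $r_t$ is needed beyond what the hypothesis $\Reg_T=o(T)$ almost surely already encodes.
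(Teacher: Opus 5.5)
Your proposal is correct and follows the paper's route: the Hilbert-space strong law together with continuity (1-Lipschitz) of the distance gives $\dist(\bar r_T,S)\to\delta$ almost surely. You then divide the pathwise wealth bound of the gap-monitor theorem by $T$ and use the two sublinear-regret hypotheses. You make explicit some details the paper leaves implicit, namely Mourier's SLLN, why $B$ exists, and why a positive liminf gives $K_T\to\infty$.
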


\begin{proof}
The Hilbert-space strong law and continuity of distance imply $\dist(\bar r_T,S)\to\delta$.  Apply \cref{eq:wealth-distance} and the two sublinear-regret assumptions.
\end{proof}

The theorem is deliberately modular.  The OCO layer estimates a support-function gap.  The betting layer supplies optional-stopping validity and amplifies a positive gap.  Neither layer alone provides both properties.

\section{Two-Sample Specializations}
\label{sec:examples}

We now take $S=\{0\}$.  Then $h_S\equiv0$, and the residual is simply $\ip{w_t}{r_t}$.  Under equality of the two distributions, the conditional mean of $r_t$ is zero, so two-sided bets are also valid and one recovers the exact martingale construction of \citet{shekhar2024}.  The one-sided formulation in \cref{thm:gap-monitor} is sufficient for power because a symmetric witness class orients the population discrepancy positively.

\subsection{Bounded two-sample mean testing}

Assume $X_t,Y_t\in\R^m$ with $\norm{X_t}_2,\norm{Y_t}_2\leq1$, and define
\begin{equation}
  r_t=\frac{X_t-Y_t}{2},\qquad S=\{0\},\qquad \norm{w_t}_2\leq1.
  \label{eq:bounded-embedding}
\end{equation}
The scalar residual $v_t=\ip{w_t}{r_t}$ belongs to $[-1,1]$, and
\begin{equation}
  \dist(\E r_t,S)=\frac12\norm{\E X-\E Y}_2=: \Delta.
  \label{eq:mean-delta}
\end{equation}
The witness class in \citet{shekhar2024} is $g_u(x)=\ip{u}{x}$ with $\norm{u}_2\leq1/2$.  The reparameterization $w=2u$ makes its payoff
\[
  g_u(X_t)-g_u(Y_t)=\ip{w}{r_t}.
\]
Thus the adaptive online-gradient-ascent update in that paper is exactly a no-regret normal-direction learner for the target $\{0\}$, after this harmless rescaling.

For the gain regret
\[
  R_T=\sup_{\norm{w}\leq1}\sum_{t=1}^T\ip{w}{r_t}
      -\sum_{t=1}^T\ip{w_t}{r_t},
\]
\cref{prop:identity} becomes
\begin{equation}
  \frac1T\sum_{t=1}^Tv_t
  =\norm{\bar r_T}_2-\frac{R_T}{T}.
  \label{eq:mean-identity}
\end{equation}
The empirical Euclidean mean gap is therefore not merely analogous to the learner's average payoff: the two are equal up to regret per round.  Under the alternative, $\norm{\bar r_T}\to\Delta$ and adaptive OGA has $R_T=O(\sqrt{\sum_t\norm{r_t}^2})$, so the bettor eventually receives a positive average payoff.

Combining OGA with the sharper variance-sensitive analysis in the corrected betting paper yields
\begin{equation}
  \E[\tau_\alpha]
  =O\!\left(
    \frac{\sigma^2\log(1/(\alpha\Delta))}{\Delta^2}
    +\frac{\log(1/(\alpha\Delta))}{\Delta}
  \right),
  \label{eq:corrected-stopping}
\end{equation}
where $\sigma^2$ is the supremum of the payoff variance over the witness ball \citep{shekhar2024,shekharcorr2025}.  The deterministic residual identity explains the appearance of witness regret and $\Delta$; it does not by itself prove the summability needed for \cref{eq:corrected-stopping}.

\subsection{Sequential kernel MMD}

Let $k$ be a positive-definite kernel with $k(x,x)\leq1$, let $\cH_k$ be its reproducing kernel Hilbert space, and let $\phi(x)=k(x,\cdot)$.  Define
\begin{equation}
  r_t=\frac{\phi(X_t)-\phi(Y_t)}{2}\in\cH_k,qquad
  S=\{0\},\qquad \norm{w_t}_{\cH_k}\leq1.
  \label{eq:mmd-embedding}
\end{equation}
Because $\norm{\phi(x)}_{\cH_k}\leq1$, we have $\norm{r_t}_{\cH_k}\leq1$.  The population separation is
\begin{equation}
  \dist(\E r_t,S)
  =\frac12\norm{\mu_{P_X}-\mu_{P_Y}}_{\cH_k}
  =\frac12\MMD_k(P_X,P_Y).
  \label{eq:mmd-distance}
\end{equation}
For characteristic kernels this is positive exactly when $P_X\neq P_Y$ \citep{gretton2012,sriperumbudur2011}.

Set $g_t=w_t/2$.  The residual equals
\begin{equation}
  v_t=\ip{w_t}{r_t}_{\cH_k}
  =g_t(X_t)-g_t(Y_t),
  \label{eq:mmd-payoff}
\end{equation}
which is precisely the payoff used in the sequential kernel-MMD test.  Its OGA witness update is an OCO learner on linear losses in $\cH_k$.  The residual--regret identity reads
\begin{equation}
  \frac1T\sum_{t=1}^Tv_t
  =\frac12\MMD_k(\widehat P_{X,T},\widehat P_{Y,T})-\frac{R_T}{T},
  \label{eq:mmd-identity}
\end{equation}
where the displayed empirical MMD is the norm of the difference between empirical mean embeddings.  This equality formalizes the sense in which no-regret witness learning tracks a maximum discrepancy.

The resulting test has three logically separate guarantees:
\begin{enumerate}[(a)]
  \item conditional fairness of \cref{eq:mmd-payoff} under $P_X=P_Y$ gives anytime validity;
  \item sublinear witness regret and characteristicness give a positive limiting margin and hence power one;
  \item ONS wealth bounds plus variance-sensitive concentration give the refined stopping-time and type-II exponent results of \citet{shekhar2024}.
\end{enumerate}

\subsection{A convex mean-null beyond equality testing}

The same reduction is not limited to $S=\{0\}$.  Suppose $r_t$ is a vector statistic and the null asserts $\E[r_t\mid\cF_{t-1}]\in S$ for a known compact convex set.  Examples include moment-inequality nulls, multivariate tolerance regions, and conditional calibration constraints.  The support-function residual $q_t=\ip{w_t}{r_t}-h_S(w_t)$ is conditionally superfair under the null.  Therefore \cref{thm:gap-monitor} yields an anytime-valid test of the entire convex null, while OCO adaptively searches for a violated supporting halfspace.  This is the natural testing counterpart of Shimkin's general-target formulation.

\section{Blackwell--Betting Reductions}
\label{sec:discussion}

The common residual supports three increasingly strong statements.  First, the
residual--regret identity is an exact algebraic equivalence between a vector
target gap and cumulative scalar gain.  Second, betting converts that equality
into a finite-time separation theorem: a target gap outside an explicit radius
forces rejection.  Third, when the statistician controls the experiment or
data source, the action can satisfy Blackwell's supporting-halfspace condition
for the null mean payoff.  The resulting protocol is a stochastic Blackwell
game equipped with an anytime-valid evidence process.  This section develops
these three statements and places them in the surrounding literature.

\subsection{Three converging lines of work}

The first line begins with Blackwell's halfspace characterization of
approachability \citep{blackwell1956}.  The algorithmic equivalence between
approachability and online linear optimization was established by
\citet{abernethy2011}; \citet{perchet2014} develops the broader network of
equivalences among approachability, regret, and calibration.  Two later
directions are especially relevant here.  The response-based method of
\citet{bernstein2015} replaces projection by a feasible response to a forecast
of Nature's mixed action, whereas \citet{shimkin2016} uses the support function
to obtain a direct OCO algorithm for an arbitrary convex target.  These are
distinct constructions: the former uses Blackwell's dual response condition,
while the latter uses the primal supporting-halfspace condition in
\cref{eq:blackwell-response}.  Unknown games and partial monitoring show how
the available feedback and knowledge of the payoff map alter what can be
approached and at what rate \citep{mannor2014,kwon2017}.

The progression also locates the assumptions behind each reduction.
Abernethy, Bartlett, and Hazan use conic duality to pass between online linear
optimization and approachability.  Shimkin's support-function loss avoids a
conic lift and exposes the residual used here directly.  By contrast, the
unknown-game results of \citet{mannor2014} show that a best target selected in
hindsight is generally unattainable without relaxation.  The optimal
$T^{-1/2}$ and $T^{-1/3}$ partial-monitoring rates of \citet{kwon2017} further
show that the observation channel is part of the statistical problem, not a
cosmetic modification of the full-information game.

The second line treats wealth maximization itself as online learning.  Test
martingales and e-processes convert conditional fairness into optional-stopping
validity \citep{ville1939,shafer2021}.  Coin-betting potentials, in the reverse
direction, generate parameter-free online linear optimization algorithms
\citep{orabona2016}, and scalar betting primitives support black-box online
learning reductions in general normed spaces \citep{cutkosky2018}.  In the
two-sample test of \citet{shekhar2024}, these ideas appear as two online
learners: a witness learner maximizes a predictable IPM payoff, and an ONS
bettor compounds that payoff into evidence.

The direction of the coin-betting reduction is complementary to ours.
\citet{orabona2016} start from a wealth guarantee and recover online-learning
regret through convex conjugacy.  Here the normal learner first produces an
OCO-certified residual, after which scalar wealth turns that residual into a
test.  The two reductions therefore meet at $q_t$, even though the geometric
potential for approachability and the log-wealth potential remain different.

The third line is active sequential experimentation.  In classical parametric
models, the experiment is selected adaptively to increase the information rate
\citep{chernoff1959,naghshvar2013}.  The same control variable now appears in
nonparametric testing: \citet{hsu2025} combine adaptive selection among
heterogeneous data sources with betting-based two-sample inference.  This
active setting supplies exactly the object required for a full Blackwell
protocol---an action chosen after the normal direction and before the next
observation.

Classical active-testing policies optimize model-based information rates.
The nonparametric procedure of \citet{hsu2025} instead combines exploitation
of empirically distinguishable sources with continuing exploration, and proves
level-$\alpha$ validity, power one, and an expected stopping-time bound.  The
controlled construction in \cref{sec:controlled-game} isolates the geometric
condition beneath both settings: action selection must preserve a null
supporting halfspace, while its alternative objective is to maintain positive
distance from the target.

These literatures do not imply that arbitrary black-box reductions preserve
optimal constants or rates.  Indeed, \citet{dann2025} show that standard
approachability--regret reductions can distort the optimal rate.  The results
below avoid such an inheritance claim: their rates follow directly from the
exact identity \cref{eq:exact-identity} for the particular support-function
instance used by the test.

\subsection{An exact protocol dictionary}

\Cref{tab:dictionary} separates the objects that coincide from the feature that
changes across the three protocols.  The controlled model contains
deterministic approachability and passive testing as boundary cases.

The dictionary yields three levels of equivalence.  At the \emph{algebraic}
level, \cref{eq:exact-identity} is an equality on every payoff sequence.  At
the \emph{quantitative} level, \cref{thm:finite-rate-transfer} below maps the
Blackwell/OCO radius into an anytime evidence radius with an explicit extra
term.  At the \emph{operational} level,
\cref{thm:controlled-blackwell-betting} places the direction, response action,
random payoff, and wealth update in one protocol.

\begin{center}
  \refstepcounter{table}\label{tab:dictionary}
  \begin{minipage}{\textwidth}
  \small
  \noindent Table~\thetable: The Blackwell--betting dictionary.  The same
  support residual is controlled pathwise, merely observed, or controlled in
  conditional mean.\par\smallskip
  \centering
  \renewcommand{\arraystretch}{1.0}
  \begin{tabularx}{\linewidth}{@{}>{\raggedright\arraybackslash}p{0.14\textwidth}>{\raggedright\arraybackslash}X>{\raggedright\arraybackslash}X>{\raggedright\arraybackslash}X@{}}
    \toprule
    Object & Deterministic Blackwell game & Passive betting test & Controlled stochastic reduction \\
    \midrule
    Vector payoff
      & $r(x_t,j_t)$
      & data statistic $r_t$
      & $R(a_t,Z_t)$ from the selected experiment \\
    Normal direction
      & OCO output $w_t$
      & learned witness $w_t$
      & OCO output $w_t$ \\
    Response action
      & $x_t=x_S(w_t)$
      & singleton action set
      & $a_t\in\mathcal A_0(w_t)$ \\
    Residual
      & $q_t=\ip{w_t}{r_t}-h_S(w_t)$
      & the same $q_t$
      & the same $q_t$ \\
    Null control
      & $q_t\leq0$ for every move
      & $\E[q_t\mid\cF_{t-1}]\leq0$
      & the action enforces $\E[q_t\mid\cF_{t-1}]\leq0$ \\
    Output
      & deterministic target radius
      & anytime-valid evidence
      & target convergence under the null and evidence under alternatives \\
    \bottomrule
  \end{tabularx}
  \end{minipage}
\end{center}

\subsection{Passive testing as the singleton-action game}

A passive test is a stochastic mean-payoff game with the singleton action set
$\mathcal A=\{a_0\}$.  The null condition
$\E[r_t\mid\cF_{t-1}]\in S$ is equivalent, by support-function separation, to
\begin{equation}
  \ip{w}{\E[r_t\mid\cF_{t-1}]}-h_S(w)\leq0
  \quad\text{for every }w\in\mathbb B.
  \label{eq:singleton-blackwell}
\end{equation}
Thus the passive protocol already satisfies the stochastic halfspace
condition; its response map is simply degenerate.  Under the null, the average
payoff approaches $S$ through a martingale law of large numbers.  Under an
alternative, the same normal learner estimates a supporting direction of the
empirical target violation, and wealth records the violation without losing
validity under optional stopping.

There are also two different optimization variables, each with a precise role.
The vector $w_t$ is the geometric dual variable in the support representation
of distance.  The scalar $\lambda_t$ is a capital fraction optimized for log
wealth.  The composition is therefore
\begin{equation}
  \underbrace{(r_t,S)}_{\text{vector target geometry}}
  \ \xrightarrow{\ \text{OCO in }w\ }\
  \underbrace{q_t}_{\text{Blackwell residual}}
  \ \xrightarrow{\ \text{online betting in }\lambda\ }\
  \underbrace{K_t}_{\text{anytime evidence}}.
  \label{eq:three-layer-reduction}
\end{equation}
This is a composition of two online reductions, rather than a Lagrangian
primal--dual pair: $\lambda_t$ does not replace the experimental response
action.

\subsection{A finite-time rate-transfer theorem}
\label{sec:rate-transfer}

The residual identity transfers finite-time guarantees, not only asymptotic
consistency.  Let
\[
  D_T=\dist(\bar r_T,S)
\]
and suppose that the direction learner and bettor admit simultaneous pathwise
envelopes for nonnegative sequences $(a_T)$ and $(\ell_T)$,
\begin{equation}
  \Reg_T\leq a_T,
  \qquad
  L_T\leq\ell_T,
  \qquad T\geq1.
  \label{eq:two-regret-envelopes}
\end{equation}
Define the evidence radius
\begin{equation}
  c_T(\alpha)
  =2B\sqrt{\frac{\log(1/\alpha)+\ell_T}{T}}.
  \label{eq:evidence-radius}
\end{equation}

\begin{theorem}[Finite-time Blackwell--betting rate transfer]
\label{thm:finite-rate-transfer}
Under the conditions of \cref{thm:gap-monitor} and
\cref{eq:two-regret-envelopes}, every data path satisfies
\begin{equation}
  \log K_T
  \geq
  \frac{T}{4B^2}
  \pos{D_T-\frac{a_T}{T}}^2-\ell_T.
  \label{eq:finite-rate-transfer}
\end{equation}
Consequently, for every $T$,
\begin{equation}
  D_T\geq\frac{a_T}{T}+c_T(\alpha)
  \quad\Longrightarrow\quad
  K_T\geq\frac1\alpha
  \quad\Longrightarrow\quad
  \tau_\alpha\leq T.
  \label{eq:crossing-boundary}
\end{equation}
Equivalently, non-rejection through time $T$ gives the pathwise certificate
\begin{equation}
  \tau_\alpha>T
  \quad\Longrightarrow\quad
  D_T<\frac{a_T}{T}+c_T(\alpha).
  \label{eq:nonrejection-certificate}
\end{equation}
Under the conditional-mean null \cref{eq:convex-null}, this certificate is
simultaneous in time with probability at least $1-\alpha$:
\begin{equation}
  \Pp_0\!\left(
    \forall T\geq1,\quad
    D_T<\frac{a_T}{T}+c_T(\alpha)
  \right)\geq1-\alpha.
  \label{eq:time-uniform-radius}
\end{equation}
\end{theorem}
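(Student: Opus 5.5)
The plan is to start from the pathwise wealth bound \cref{eq:wealth-distance} of \cref{thm:gap-monitor} and substitute the envelopes \cref{eq:two-regret-envelopes}. Since $\Reg_T\leq a_T$, we have $D_T-\Reg_T/T\geq D_T-a_T/T$. The map $x\mapsto \pos{x}^2$ is nondecreasing, so $\pos{D_T-\Reg_T/T}^2\geq\pos{D_T-a_T/T}^2$. Combined with $-L_T\geq-\ell_T$, this gives \cref{eq:finite-rate-transfer} on every path. Two small points need checking here. First, \cref{eq:wealth-distance} must hold without any sign restriction on $\Reg_T$; this is true because it came from the exact identity in \cref{prop:identity}. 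Second, the comparator bound $\max_{\lambda\in[0,1/2]}\sum_t\log(1+\lambda v_t)\geq \tfrac{T}{4}\pos{\bar v_T}^2$ is taken from \cref{app:constant-bet}.

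For \cref{eq:crossing-boundary}, assume $D_T\geq a_T/T+c_T(\alpha)$. Then $\pos{D_T-a_T/T}\geq c_T(\alpha)\geq0$. Squaring and inserting the definition \cref{eq:evidence-radius} gives
\[
\frac{T}{4B^2}\pos{D_T-\frac{a_T}{T}}^2\geq \frac{T}{4B^2}\cdot 4B^2\,\frac{\log(1/\alpha)+\ell_T}{T}=\log(1/\alpha)+\ell_T .
\]
By \cref{eq:finite-rate-transfer} this yields $\log K_T\geq\log(1/\alpha)$, that is, $K_T\geq1/\alpha$. Then $\tau_\alpha\leq T$ by the definition of $\tau_\alpha$ as a first hitting time. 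The certificate \cref{eq:nonrejection-certificate} is the contrapositive of this chain.

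For \cref{eq:time-uniform-radius}, consider the event $\{\tau_\alpha=\infty\}$. On this event $\tau_\alpha>T$ for every $T$, so \cref{eq:nonrejection-certificate} holds simultaneously for all $T\geq1$. Hence the event in \cref{eq:time-uniform-radius} contains $\{\tau_\alpha=\infty\}$. The first part of \cref{thm:gap-monitor} (Ville's inequality for the nonnegative supermartingale $K_t$) gives $\Pp_0(\tau_\alpha<\infty)\leq\alpha$, and therefore the event has probability at least $1-\alpha$.

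The only step needing care is the simultaneity of the envelopes. The $a_T,\ell_T$ must bound the realized regrets on the same path and for every $T$, which \cref{eq:two-regret-envelopes} assumes pathwise. If $\ell_T$ were only an expected-regret bound, the argument would fail. Apart from that, every step is monotone algebra on top of \cref{thm:gap-monitor}, and I expect no real obstacle.
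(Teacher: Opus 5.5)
Your proposal is correct and follows the paper's proof essentially step for step. The shared steps are: monotonicity of $x\mapsto\pos{x}^2$ with the two envelopes applied to \cref{eq:wealth-distance}; the identity $\frac{T}{4B^2}c_T(\alpha)^2=\log(1/\alpha)+\ell_T$ giving the crossing; the contrapositive for the certificate; and Ville's inequality on the non-rejection event. The only cosmetic difference is that you work on $\{\tau_\alpha=\infty\}$ via the type-I bound of \cref{thm:gap-monitor}, whereas the paper uses $\{\sup_T K_T<1/\alpha\}$; that event is contained in yours, so the conclusion is identical.
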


\begin{proof}
Monotonicity of the positive-part function, the bounds
$\Reg_T\leq a_T$ and $L_T\leq\ell_T$, and
\cref{eq:wealth-distance} give \cref{eq:finite-rate-transfer}.  If the first
inequality in \cref{eq:crossing-boundary} holds, its right-hand side is at
least $\log(1/\alpha)$, hence $K_T\geq1/\alpha$.  Taking the contrapositive
gives \cref{eq:nonrejection-certificate}.  Finally, Ville's inequality shows
that the event $\sup_TK_T<1/\alpha$ has probability at least $1-\alpha$ under
the null; on that event the certificate holds for every $T$.
\end{proof}

The theorem gives a precise rate dictionary.  A classical all-outcomes
Blackwell response makes every $q_t\leq0$, so the exact identity gives the
deterministic radius $D_T\leq a_T/T$.  With conditional-mean control, the same
algorithmic radius $a_T/T$ is inherited without algebraic loss and betting adds
only $c_T(\alpha)$.  For $a_T=O(\sqrt T)$ and
$\ell_T=O(\log T)$, the evidence boundary is
\begin{equation}
  D_T=O\!\left(\sqrt{\frac{\log(T/\alpha)}{T}}\right).
  \label{eq:standard-evidence-rate}
\end{equation}
If the smooth-target FTL analysis of \citet{shimkin2016} gives
$a_T=O(\log T)$, the statistical evidence radius rather than the geometric
regret term becomes dominant.  In a partial-monitoring extension, an effective
approachability radius of order $T^{-1/3}$ can instead dominate the statistical
radius \citep{kwon2017}; the payoff estimator and feedback model must then be
included in the reduction.  For the full-information instance here, the rate
follows from an exact identity, so the non-rate-preservation phenomenon of
\citet{dann2025} does not introduce an additional hidden factor.

\begin{corollary}[Detection under finite separation]
\label{cor:finite-detection}
Fix $\delta>0$.  Suppose that on an event $E_T$,
\begin{equation}
  D_T\geq\delta-b_T.
  \label{eq:finite-separation-event}
\end{equation}
If
\begin{equation}
  b_T+\frac{a_T}{T}\leq\frac\delta2
  \quad\text{and}\quad
  \frac{T\delta^2}{16B^2}
  \geq\log(1/\alpha)+\ell_T,
  \label{eq:finite-detection-conditions}
\end{equation}
then $\tau_\alpha\leq T$ on $E_T$.  In particular, if
$\Pp_1(E_T)\geq1-\eta$, then
$\Pp_1(\tau_\alpha>T)\leq\eta$.
\end{corollary}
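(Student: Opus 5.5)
The plan is to work on a fixed path in $E_T$ and apply the pathwise bound \cref{eq:finite-rate-transfer} of \cref{thm:finite-rate-transfer} directly. No probabilistic argument is needed until the very last sentence.

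\emph{Step 1: lower-bound the clipped gap.} Combining \cref{eq:finite-separation-event} with the first condition in \cref{eq:finite-detection-conditions} gives
\[
  D_T-\frac{a_T}{T}\;\geq\;\delta-\Bigl(b_T+\frac{a_T}{T}\Bigr)\;\geq\;\frac{\delta}{2}>0 .
\]
Hence the positive part in \cref{eq:finite-rate-transfer} equals $D_T-a_T/T$, and this quantity is at least $\delta/2$. Because $x\mapsto x_+^2$ is nondecreasing, we obtain
\[
  \frac{T}{4B^2}\pos{D_T-\frac{a_T}{T}}^2\;\geq\;\frac{T\delta^2}{16B^2}.
\]

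\emph{Step 2: convert to wealth.} Substituting into \cref{eq:finite-rate-transfer} and then using the second condition in \cref{eq:finite-detection-conditions} yields
\[
  \log K_T\;\geq\;\frac{T\delta^2}{16B^2}-\ell_T\;\geq\;\log(1/\alpha).
\]
So $K_T\geq1/\alpha$, and by the definition of $\tau_\alpha$ this gives $\tau_\alpha\leq T$ on $E_T$. Alternatively, Step 1 can be routed through \cref{eq:crossing-boundary}. The second condition is equivalent to $\delta/2\geq c_T(\alpha)$, so $D_T\geq a_T/T+\delta/2\geq a_T/T+c_T(\alpha)$, and \cref{eq:crossing-boundary} applies directly. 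I will present whichever route reads more cleanly; both are one line.

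\emph{Step 3: probabilistic consequence.} The containment $E_T\subseteq\{\tau_\alpha\leq T\}$ gives $\{\tau_\alpha>T\}\subseteq E_T^c$, and therefore $\Pp_1(\tau_\alpha>T)\leq\Pp_1(E_T^c)\leq\eta$.

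There is no substantive obstacle. The one point needing care is that \cref{thm:finite-rate-transfer} is a pathwise statement, so it holds under the alternative $\Pp_1$ and not only under the null. Its hypotheses, the boundedness \cref{eq:bounded-residual} and the regret envelopes \cref{eq:two-regret-envelopes}, are deterministic path conditions, and the supermartingale part of \cref{thm:gap-monitor} is not used here. I will state this explicitly so that applying \cref{eq:finite-rate-transfer} on $E_T$ under $\Pp_1$ is justified.
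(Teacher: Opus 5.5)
Your proposal is correct and matches the paper's proof. On $E_T$ you combine the separation event with the first condition to get $D_T-a_T/T\geq\delta/2$, then apply the pathwise bound of \cref{thm:finite-rate-transfer} and use the second condition to reach $\log K_T\geq\log(1/\alpha)$. Your remarks that this bound is pathwise (so it is usable under $\Pp_1$) and the containment argument giving $\Pp_1(\tau_\alpha>T)\leq\eta$ spell out steps the paper leaves implicit.
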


\begin{proof}
On $E_T$, \cref{eq:finite-separation-event,eq:finite-detection-conditions}
give $D_T-a_T/T\geq\delta/2$.  Therefore
\cref{eq:finite-rate-transfer} yields
\[
  \log K_T\geq\frac{T\delta^2}{16B^2}-\ell_T
  \geq\log(1/\alpha).
\]
\end{proof}

For example, if $a_T\leq c_R\sqrt T$ and
$\ell_T\leq c_L\log(T+1)$, the online-learning part of
\cref{eq:finite-detection-conditions} holds once
$T\geq16c_R^2/\delta^2$ and $b_T\leq\delta/4$; the remaining wealth condition
has the usual $B^2\delta^{-2}\log(1/\alpha)$ scale, with a logarithmic overhead
from $\ell_T$.  A concentration inequality supplies $b_T$ and therefore a
high-probability stopping bound.  Summable tail probabilities are the
additional ingredient needed to integrate that bound into an expected stopping
time, exactly the step repaired in \citet{shekharcorr2025}.

\subsection{Controlled experiments: a stochastic Blackwell--betting game}
\label{sec:controlled-game}

We now add a genuine control action.  Let $\mathcal A$ be a space of
experiments, queries, treatments, or data sources; randomized experiments may
be included as elements of $\mathcal A$.  Let $\mathcal P_0$ be a composite
null class.  Selecting $a\in\mathcal A$ and then observing
$Z\sim P(\cdot\mid a)$ produces a vector payoff
$R(a,Z)\in\cH$.  Write
\begin{equation}
  m_P(a)=\E_P[R(a,Z)\mid a]
  \label{eq:controlled-mean-payoff}
\end{equation}
for the mean-payoff map.

For each normal $w\in\mathbb B$, define the null-safe response set
\begin{equation}
  \mathcal A_0(w)
  =\left\{
    a\in\mathcal A:
    \sup_{P\in\mathcal P_0}
    \bigl\{\ip{w}{m_P(a)}-h_S(w)\bigr\}\leq0
  \right\}.
  \label{eq:null-safe-response-set}
\end{equation}
Assume that $\mathcal A_0(w)$ is nonempty for every $w\in\mathbb B$ and
admits a measurable selector.  This is Blackwell's primal
supporting-halfspace condition applied to the mean-payoff game.  The stochastic
Blackwell--betting protocol is
\begin{enumerate}[(1)]
  \item the OCO learner selects a predictable $w_t\in\mathbb B$;
  \item the controller selects a predictable
        $a_t\in\mathcal A_0(w_t)$;
  \item under the null, Nature selects an adapted conditional law
        $P_t(\cdot\mid a_t)\in\mathcal P_0$, and $Z_t$ is drawn from it;
  \item the payoff $r_t=R(a_t,Z_t)$ is observed, and the predictable bettor
        updates wealth using
        \[
          v_t=\frac{\ip{w_t}{r_t}-h_S(w_t)}{B}.
        \]
\end{enumerate}

\begin{theorem}[Controlled Blackwell--betting reduction]
\label{thm:controlled-blackwell-betting}
Suppose \cref{eq:null-safe-response-set} holds and the residuals satisfy
\cref{eq:bounded-residual}.  Under every adaptive null sequence
$P_t\in\mathcal P_0$:
\begin{enumerate}[(i)]
  \item $(K_t)$ is a nonnegative supermartingale, and hence
        $\Pp_0(\tau_\alpha<\infty)\leq\alpha$;
  \item if $\Reg_T=o(T)$ almost surely, then
        $\dist(\bar r_T,S)\to0$ almost surely.
\end{enumerate}

Under an alternative law, set
$m_t=\E[r_t\mid\cF_{t-1}]$.  Suppose
\begin{equation}
  \liminf_{T\to\infty}
  \dist\!\left(\frac1T\sum_{t=1}^Tm_t,S\right)
  \geq\delta>0
  \quad\text{almost surely},
  \label{eq:controlled-detectability}
\end{equation}
and suppose the Hilbert-valued martingale noise obeys
\begin{equation}
  \norm{\frac1T\sum_{t=1}^T(r_t-m_t)}\longrightarrow0
  \quad\text{almost surely}.
  \label{eq:vector-martingale-slln}
\end{equation}
If also $\Reg_T=o(T)$ and $L_T=o(T)$ almost surely, then
\begin{equation}
  \liminf_{T\to\infty}\frac1T\log K_T
  \geq\frac{\delta^2}{4B^2}
  \quad\text{almost surely}.
  \label{eq:controlled-exponent}
\end{equation}
Consequently, $\tau_\alpha<\infty$ almost surely for every
$\alpha\in(0,1)$.
\end{theorem}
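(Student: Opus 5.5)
The plan treats the three claims separately. Each one reduces to an earlier result once we use the fact that the null-safe action controls the conditional mean of the residual.

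For (i), we argue as in the proof of \cref{thm:gap-monitor}. The normal $w_t$ and the action $a_t\in\mathcal A_0(w_t)$ are both $\cF_{t-1}$-measurable, and the conditional law $P_t(\cdot\mid a_t)$ lies in $\mathcal P_0$. Hence the conditional mean $m_t=\E[r_t\mid\cF_{t-1}]$ equals $m_{P_t}(a_t)$, and
\[
  \E[v_t\mid\cF_{t-1}]=\frac{\ip{w_t}{m_{P_t}(a_t)}-h_S(w_t)}{B}\leq0
\]
by the definition \cref{eq:null-safe-response-set}. The bounded residual assumption \cref{eq:bounded-residual} gives $v_t\in[-1,1]$. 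Together with $\lambda_t\in[0,1/2]$, this shows that $K_t$ is nonnegative and satisfies $\E[K_t\mid\cF_{t-1}]\leq K_{t-1}$. Ville's inequality then gives $\Pp_0(\tau_\alpha<\infty)\leq\alpha$. The only care needed is measurability: the measurable selector ensures that $a_t$ is predictable and that the conditional expectation is computed through the mean-payoff map.

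For (ii), we invoke \cref{prop:identity}, which gives
\[
  \dist(\bar r_T,S)=\frac1T\sum_t q_t+\frac{\Reg_T}{T}.
\]
Write $q_t=\E[q_t\mid\cF_{t-1}]+\xi_t$. The first term is $\leq0$ by the computation above. The increments $\xi_t$ are martingale differences bounded by $2B$. By the strong law for bounded martingale differences (e.g.\ Azuma plus Borel--Cantelli, or Chow's theorem), $\frac1T\sum_t\xi_t\to0$ almost surely. It follows that $\limsup_T\dist(\bar r_T,S)\leq\limsup_T\Reg_T/T=0$. Since distance is nonnegative, the limit is $0$. This step is the most delicate of the three, because a conditional-mean bound only controls the residual up to noise. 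The key observation is that the noise we must handle is scalar and bounded, so no Hilbert-space law of large numbers is needed here.

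For the alternative, we start from \cref{eq:wealth-distance}, which holds pathwise regardless of the law. The distance map is $1$-Lipschitz, so
\[
  \left|\dist(\bar r_T,S)-\dist\!\left(\tfrac1T\textstyle\sum_t m_t,S\right)\right|\leq\norm{\tfrac1T\textstyle\sum_t(r_t-m_t)}.
\]
By \cref{eq:vector-martingale-slln} this bound tends to $0$ almost surely. Combined with \cref{eq:controlled-detectability}, we get $\liminf_T\dist(\bar r_T,S)\geq\delta$ almost surely. Since $\Reg_T/T\to0$, the positive part in \cref{eq:wealth-distance} has $\liminf$ at least $\delta$, and its square has $\liminf$ at least $\delta^2$ by continuity and monotonicity of $x\mapsto\pos{x}^2$. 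Dividing \cref{eq:wealth-distance} by $T$ and using $L_T/T\to0$ yields \cref{eq:controlled-exponent}. Finally, a positive exponential rate forces $\log K_T\to\infty$ almost surely, so $K_T\geq1/\alpha$ eventually and $\tau_\alpha<\infty$ almost surely.
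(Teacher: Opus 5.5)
Your proposal is correct and follows the paper's proof essentially step for step. Null-safety of $a_t$ makes the conditional mean of the residual nonpositive, so the argument of \cref{thm:gap-monitor} yields the supermartingale property and the Ville bound. Part (ii) comes from \cref{eq:exact-identity} together with a strong law for bounded scalar martingale differences. The exponent comes from the $1$-Lipschitz property of distance combined with \cref{eq:controlled-detectability}, \cref{eq:vector-martingale-slln}, and the pathwise bound \cref{eq:wealth-distance}; your extra remarks (the $2B$ bound on the increments, and that \cref{eq:wealth-distance} holds under any law) only make explicit details the paper leaves implicit.
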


\begin{proof}
Under the null, the response condition gives
\[
  \E[\ip{w_t}{r_t}-h_S(w_t)\mid\cF_{t-1}]
  =\ip{w_t}{m_{P_t}(a_t)}-h_S(w_t)\leq0.
\]
The proof of \cref{thm:gap-monitor} therefore establishes the
supermartingale and type-I error claims.  The residual minus its conditional
mean is a bounded scalar martingale difference, so its time average converges
to zero almost surely.  Hence the average residual has nonpositive limiting
superior.  Combining this fact with \cref{eq:exact-identity} and sublinear
regret forces $\dist(\bar r_T,S)\to0$.

Under the alternative, distance to a closed set is 1-Lipschitz.  Therefore
\cref{eq:controlled-detectability} and
\cref{eq:vector-martingale-slln} imply
$\liminf_T\dist(\bar r_T,S)\geq\delta$.  Apply
\cref{eq:wealth-distance} and the two sublinear-regret assumptions to obtain
\cref{eq:controlled-exponent}.
\end{proof}

The mean response is a precise stochastic relaxation of Blackwell's classical
condition.  If the stronger all-outcomes response
\begin{equation}
  \sup_z\bigl\{
    \ip{w}{R(a_S(w),z)}-h_S(w)
  \bigr\}\leq0
  \label{eq:all-outcomes-response}
\end{equation}
holds for every $w$, then each realized residual is nonpositive and the
protocol reduces to deterministic OCO approachability.  If
$\mathcal A=\{a_0\}$, it reduces to the passive conditional-mean test of
\cref{eq:singleton-blackwell}.  With a nontrivial $\mathcal A$, the controller
may choose among null-safe experiments to maximize alternative separation.
Thus the same residual governs deterministic approachability, stochastic
approachability, and sequential detection, with the mode of control determining
which guarantee is active.

\paragraph{Heterogeneous two-sample sources.}
Let $a\in\{1,\ldots,M\}$ index a data source.  At source $a$, observe
$(X^a,Y^a)\sim P_a\times Q_a$, let $k_a(x,x)\leq1$, and write
$\phi_a$ for its feature map in $\cH_a$.  Work in the direct-sum space
$\cH=\bigoplus_{a=1}^M\cH_a$ and define
\begin{equation}
  R\bigl(a,(x,y)\bigr)
  =\frac12 e_a\otimes\{\phi_a(x)-\phi_a(y)\}.
  \label{eq:direct-sum-source-payoff}
\end{equation}
Then $\norm{R(a,(x,y))}\leq1$, so $B=1$ for $S=\{0\}$.  Under the
sourcewise global null $P_a=Q_a$ for every $a$, every source is null-safe;
hence source selection can be fully adaptive without changing the e-process
validity.

Let $N_a(T)$ be the number of selections of source $a$ and
$\rho_{a,T}=N_a(T)/T$.  With
\begin{equation}
  \Delta_a
  =\norm{\mu_{P_a}-\mu_{Q_a}}_{\cH_a}
  =\MMD_{k_a}(P_a,Q_a),
  \label{eq:source-mmd}
\end{equation}
orthogonality of the direct-sum blocks gives the exact mean separation
\begin{equation}
  \norm{\frac1T\sum_{t=1}^Tm_t}^{2}
  =\frac14\sum_{a=1}^M\rho_{a,T}^{2}\Delta_a^2.
  \label{eq:source-separation}
\end{equation}
The embedding prevents discrepancies at different sources from cancelling.
If an informative source $a^\star$ has $\Delta_{a^\star}>0$ and
$\liminf_T\rho_{a^\star,T}\geq\rho>0$, then
\cref{thm:controlled-blackwell-betting} yields
\begin{equation}
  \liminf_{T\to\infty}\frac1T\log K_T
  \geq\frac{\rho^2\Delta_{a^\star}^{2}}{16}
  \quad\text{almost surely}.
  \label{eq:source-wealth-exponent}
\end{equation}
Forced exploration supplies the positive-frequency condition; an active
source learner can improve the exponent by allocating more mass to informative
sources.  This direct-sum construction embeds the active heterogeneous-source
problem of \citet{hsu2025} in the Blackwell--betting geometry and makes the
control-to-evidence link explicit.

\section{Conclusion}
\label{sec:conclusion}

Blackwell approachability and betting-based two-sample testing share an exact
online reduction.  A no-regret learner selects supporting normals, and the
resulting scalar residual differs from distance to the convex target by exactly
regret per round.  Betting converts that residual into evidence: outside the
Blackwell/OCO radius, an explicit $T^{-1/2}$ anytime boundary forces wealth to
cross the rejection threshold.  Conversely, non-rejection supplies a
time-uniform certificate that the empirical target gap remains within the sum
of the algorithmic and evidence radii.

The controlled mean-payoff construction makes the connection operational.  A
Blackwell response action enforces the supporting halfspace for every null mean
payoff; martingale noise is handled by the e-process.  Under adaptive nulls,
the same protocol gives stochastic approachability and type-I error control.
Under alternatives that remain separated under the selected experiments, it
gives exponential wealth and power one.  Deterministic Blackwell games arise
when the observation kernel is degenerate, while ordinary passive tests arise
when the action set is a singleton.

The two online variables retain distinct meanings within this unified game.
The witness $w_t$ is the geometric dual normal, and the bet $\lambda_t$ is a
one-dimensional capital control.  Their composition recovers bounded-mean and
kernel-MMD tests and extends to active heterogeneous sources through a
direct-sum feature representation.  The exact residual identity preserves the
OCO radius for this constructed instance; variance-sensitive concentration and
summable tail control remain the additional ingredients for sharp expected
stopping-time guarantees.

\section{Disclosure}
\label{sec:disclosure}
The proof strategy and counterexample were produced by OpenAI’s GPT-5.6 Sol Ultra through Codex in
response to prompts from the author. Codex was also used to revise
the exposition and prepare the LaTeX manuscript. The author selected the
problem, directed the interactions and revisions, and is the sole named author. The AI system is acknowledged as a reasoning and writing tool, not
as an author. This disclosure is not a substitute for independent expert
mathematical review.
\bibliographystyle{plainnat}
\bibliography{references}

\appendix
\section{Auxiliary Derivations}
\label{app:additional}

\subsection{A constant-bet lower bound}
\label{app:constant-bet}

For completeness, let $v_1,\ldots,v_T\in[-1,1]$ and $\bar v_T=T^{-1}\sum_tv_t$.  For every $\lambda\in[0,1/2]$, the elementary inequality $\log(1+x)\geq x-x^2$ for $x\geq-1/2$ gives
\[
  \sum_{t=1}^T\log(1+\lambda v_t)
  \geq T\lambda\bar v_T-\lambda^2\sum_{t=1}^Tv_t^2.
\]
If $\bar v_T>0$, set $Q_T=T^{-1}\sum_tv_t^2$ and choose $\lambda=\min\{\bar v_T/(2Q_T),1/2\}$, with the convention that the first term is $+\infty$ when $Q_T=0$.  If $\bar v_T\leq Q_T$, the lower bound is $T\bar v_T^2/(4Q_T)\geq T\bar v_T^2/4$.  If $\bar v_T>Q_T$, the choice $\lambda=1/2$ yields at least $T\bar v_T/4\geq T\bar v_T^2/4$.  When $\bar v_T\leq0$, the choice $\lambda=0$ yields zero.  Therefore
\begin{equation}
  \max_{0\leq\lambda\leq1/2}\sum_{t=1}^T\log(1+\lambda v_t)
  \geq \frac{T}{4}\pos{\bar v_T}^{2}.
  \label{eq:constant-bet-bound}
\end{equation}
Combining \cref{eq:constant-bet-bound} with logarithmic regret relative to the best constant bet proves the wealth inequality used in \cref{thm:gap-monitor}.

\subsection{Normalization for the two examples}

For bounded means, $r_t=(X_t-Y_t)/2$ and $w_t=2u_t$, where the betting paper uses $\norm{u_t}_2\leq1/2$.  Thus $\ip{w_t}{r_t}=\ip{u_t}{X_t-Y_t}$ and $\dist(\E r_t,0)=\norm{\E X-\E Y}_2/2$.

For a kernel $k$ with $k(x,x)\leq1$, let $\phi(x)=k(x,\cdot)$, $r_t=(\phi(X_t)-\phi(Y_t))/2$, and $w_t=2g_t$.  The paper's constraint $\norm{g_t}_{\cH_k}\leq1/2$ is equivalent to $\norm{w_t}_{\cH_k}\leq1$, and
\[
  \ip{w_t}{r_t}_{\cH_k}=g_t(X_t)-g_t(Y_t),\qquad
  \norm{\E r_t}_{\cH_k}=\frac12\MMD_k(P_X,P_Y).
\]
The factors of two affect constants but not consistency or the orders in the stopping-time bounds.

\end{document}